\documentclass{fairmeta}
\usepackage{amsmath,amssymb}
\usepackage{amsthm}
\usepackage{algorithm}
\usepackage{algpseudocode}
\usepackage{array}
\usepackage{enumitem}
\usepackage{tikz}
\usepackage{booktabs}
\usetikzlibrary{arrows.meta,positioning,shapes.geometric,patterns}
\usepackage{colortbl}
\definecolor{panelgray}{RGB}{242,242,242}
\definecolor{averagegray}{RGB}{248,248,248}
\definecolor{oursblue}{RGB}{232,242,252}
\newcommand{\bestours}[1]{\cellcolor{oursblue}\textbf{#1}}

\long\def\abstracttext{%
Black-box On-Policy Distillation (OPD) seeks to improve a student from its own
generations when the teacher provides sampled responses but not token
probabilities.  Adversarial distillation offers one route: it learns a
discriminator over prompt-matched teacher and student responses and uses its
score as the policy reward.  However, sampling discriminator negatives from
the latest student at each step couples the learned reward to a negative
distribution that changes after every policy update. We address this moving-target problem with persistent-negative
adversarial distillation, a live-pool method that replaces a fraction of
each discriminator batch with historical, prompt-matched teacher--student
comparisons. Under matched discriminator compute, historical comparisons train the discriminator, while GRPO remains on-policy with fresh student responses. Our analysis identifies the
Bayes-optimal reward as a teacher-to-negative log-density ratio and, under
explicit assumptions, shows how persistent negatives anchor the discriminator
and reduce reward-estimation MSE relative to fresh-negative training. Across two student families, three judges, and four judged-chat benchmarks,
persistent-negative adversarial distillation consistently improves performance over current methods at matched discriminator compute. It also yields
smoother fresh-policy discriminator trajectories, with fewer below-chance
dips. These findings identify the discriminator's negative distribution as an
important design axis in black-box on-policy distillation.}

\title{Persistent Negatives for Adversarial Black-Box On-Policy Distillation}

\author[1,*]{Haixu Ma}
\author[1,2,*,\dag]{Saad Lahrichi}
\author[1]{Weiwei Li}
\author[1]{Kevin Han}
\author[1]{Weiqiang Wu}
\author[1]{Peggy Yang}
\author[1]{Dongzhuo Li}
\author[1]{Ruiyi Li}
\author[1]{Serena Li}
\author[1]{Gedi Zhou}
\author[1]{Mingze Gao}
\author[1]{Abhishek Kumar}
\author[1]{Xiangjun Fan}
\author[1]{Lizhu Zhang}

\affiliation[1]{Meta AI}
\affiliation[2]{University of Missouri}

\contribution[*]{Equal contribution}
\contribution[\dag]{This work was performed while at Meta}

\abstract{\abstracttext}

\date{\today}
\newtheorem{proposition}{Proposition} 
\newtheorem{theorem}{Theorem}[section]

\begin{document}

\maketitle

% Main body, sections 1-10. Style-agnostic: no title, abstract, bibliography
% or appendix switch here.
% =============================================================================
\section{Introduction}

Knowledge distillation transfers capabilities from a strong teacher to a
smaller student.  In the white-box setting, the student can match the
teacher's token probabilities or hidden representations
\citep{hinton2015distilling}.  Proprietary teachers instead commonly expose
only generated text.  Sequence-level knowledge distillation
(SeqKD;~\citealp{kim2016seqkd}) is compatible with this interface, but it
trains the student only on teacher trajectories.  On-policy distillation
addresses this mismatch by learning from the student's own generations, yet
established likelihood-based objectives still require teacher probabilities
\citep{gu2024minillm,agarwal2024onpolicy}. The central challenge is therefore how to provide useful feedback on the
student's own responses when the teacher supplies only generated text, without
token-level probabilities.

Adversarial distillation addresses this gap by learning a discriminator from prompt-matched teacher and student responses and using its scores as sequence-level rewards~\citep{ye2025gad}. For the same prompt, the
discriminator learns to score a teacher response above a student response;
its scores then serve as sequence-level rewards for a GRPO update
\citep{shao2024deepseekmath}.  This converts text-only teacher outputs into an
adaptive policy-learning signal, but it also creates a second estimation
problem: the reward model must be learned from a negative distribution over
student responses.  When all negatives are freshly sampled, that distribution
changes after every policy update.  Failure modes that are rare in the latest
rollout can disappear from discriminator training even when they remain
reachable by the student.

Our key observation is that these two notions of on-policy data are distinct.
The policy update must use responses sampled from the current student, but the
discriminator need not discard every valid comparison from earlier students.
We therefore introduce \emph{persistent-negative adversarial distillation}.
Our primary live-pool method maintains a bounded collection of complete,
prompt-matched teacher--student comparisons and replaces a fixed fraction of
each discriminator batch with historical comparisons.  The updated
discriminator then scores only fresh current-student responses for GRPO, so
historical data affect the policy exclusively through the learned reward and
the student update remains on-policy. Figure~\ref{fig:gad-loop} illustrates the complete training loop.
At each iteration, the current student generates fresh responses; fresh and
historical prompt-matched comparisons train the discriminator; and the updated discriminator supplies rewards for a GRPO update
computed from fresh student rollouts. The resulting student then generates the
responses used in the next iteration. 

This design admits a theoretical account. First, the Bayes-optimal
Bradley--Terry score \citep{bradley1952rank} is the log-density ratio between
the teacher distribution and the negative distribution, so changing the
negative pool changes the learned reward rather than merely its offset.
Second, persistent comparisons provide
explicit lower bounds on the historical loss contribution and discriminator
curvature. Third, under variance-reduction and bias-control
assumptions, the pooled estimator has lower reward MSE, which tightens the
error bounds for the standardized advantages and local reward-gradient term
used by GRPO.

\begin{figure}[t]
\centering
\includegraphics[width=\linewidth]{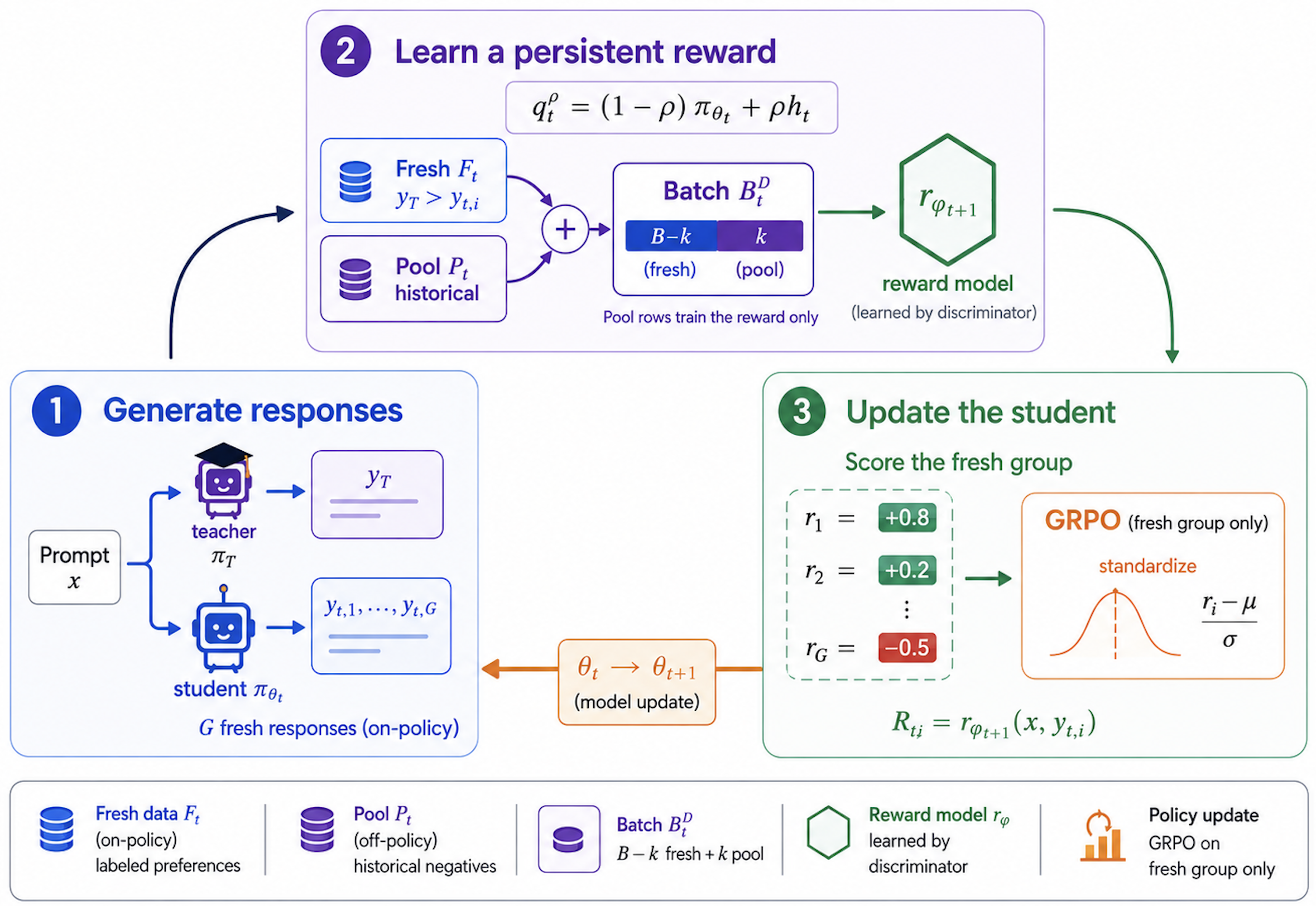}
\caption{\textbf{Workflow of persistent-negative on-policy distillation.}
At iteration $t$, each prompt $x$ is paired with a teacher response $y_T$ and
a group of fresh student responses $y_{t,1:G}$. Fresh comparisons $F_t$ are
combined with historical comparisons from $P_t$ to form the fixed-size
discriminator batch $B_t^D$, from which the updated reward
$r_{\varphi_{t+1}}$ is learned. The discriminator then scores the fresh student
group, and the resulting within-group advantages drive the GRPO update
$\theta_t\rightarrow\theta_{t+1}$ for student.}
\label{fig:gad-loop}
\end{figure}

Using GPT-5 Chat responses as black-box teacher supervision%
~\citep{openai2025gpt5chat,ye2025gad}, we evaluate
Qwen2.5-7B-Instruct~\citep{qwen2024qwen25} and
Llama-3.1-8B-Instruct~\citep{grattafiori2024llama} students across three judges and
four judged-chat benchmarks. At matched discriminator compute,
persistent-negative training outperforms current-only training%
~\citep{ye2025gad} in cross student--judge--dataset evaluation, with
equal-set-weighted gains of $+1.0$ to $+2.0$ percentage points. The resulting
students also improve over their undistilled initializations by $+1.7$ to
$+5.3$ points. In addition, persistent negatives reduce the temporal
standard deviation of fresh-policy discriminator accuracy and reduce below-chance dips from six to one.

\textit{Our contributions are}:
\begin{itemize}[leftmargin=1.3em,itemsep=2pt,topsep=3pt,parsep=0pt]
\item We identify the discriminator's negative distribution as a
first-class design choice in black-box adversarial distillation and
introduce persistent-negative training, while
keeping GRPO updates fully on-policy and discriminator compute fixed.

\item We characterize the Bayes-optimal reward as a teacher-to-negative
density ratio and establish explicit historical-loss and curvature bounds,
together with a bias--variance condition under which persistent negatives
reduce reward MSE.

\item Experiments across two student families, three judges, and both
in-domain and cross-dataset evaluations show broadly consistent gains over
current-only training, together with reduced discriminator volatility and
fewer below-chance dips.
\end{itemize}

% =============================================================================
\section{Related Work}
\label{sec:related}

\textbf{Black-box and on-policy distillation.}
Classical knowledge distillation matches teacher output distributions
\citep{hinton2015distilling}, whereas response-based methods learn from sampled
teacher text~\citep{kim2016seqkd,jiang2023lion}.  On-policy distillation trains
on student-generated trajectories, but conventional objectives require teacher
probabilities on those trajectories
\citep{gu2024minillm,agarwal2024onpolicy,lu2025opd}.  Black-box alternatives
instead derive sequence-level supervision from text: OVD uses teacher-provided
verbal scores~\citep{xiong2026ovd}, ROPD constructs prompt-specific
rubrics~\citep{fang2026ropd}, and GAD learns an adaptive discriminator from
teacher--student responses~\citep{ye2025gad}.  We develop the
discriminator-based approach by introducing persistent comparisons that
stabilize its learned reward while preserving fresh on-policy student updates. In addition, we
focus on an orthogonal question within discriminator-based distillation: which
student-response distribution should train the learned reward?

\textbf{Adversarial reward learning.}
Generative adversarial learning trains a discriminator to distinguish samples
from two distributions~\citep{goodfellow2014gan}, while adversarial imitation
learning uses the resulting discriminator as a policy reward
\citep{ho2016gail}.  Under logistic classification, the optimal discriminator
logit recovers a log-density ratio between the positive and negative
distributions.  GAD applies this principle to teacher and student responses.
Our analysis specializes it to prompt-conditional language generation and
shows that the discriminator's reward depends explicitly on the student
distribution used to construct its negatives.

\textbf{Historical and mixed-policy negatives.}
History buffers and replay have been used to stabilize adversarial image
generation and imitation learning
\citep{shrivastava2017simgan,kostrikov2019dac}.  ORPO-Distill similarly mixes
responses from different student policies when constructing preference
pairs~\citep{singh2025orpodistill}.  In ORPO-Distill, these responses directly
enter the student's preference objective.  Our historical responses instead
remain paired with their prompts and teacher responses and train only the
discriminator; GRPO continues to score and optimize fresh responses sampled
from the current student.

% % =============================================================================
\section{Persistent-Negative Adversarial Distillation}
\label{sec:method}

Figure~\ref{fig:gad-loop} provides an overview of the alternating
training procedure. We now formalize its discriminator, negative distribution,
and policy updates.
% =============================================================================

\subsection{Adversarial reward formulation}

Let \(\mathcal D\) be a prompt distribution,
\(\tau(\cdot\mid x)\) the teacher-response distribution, and
\(\pi_{\theta_t}(\cdot\mid x)\) the student policy at iteration \(t\).
We observe teacher-response samples
\(\mathcal T=\{(x,y_T)\}\), where \(x\sim\mathcal D\) and
\(y_T\sim\tau(\cdot\mid x)\), but cannot query teacher token probabilities.
We therefore learn a sequence-level score \(r_\varphi(x,y)\) from
prompt-matched teacher and student responses.

This construction connects adversarial generation to policy distillation.
In a GAN, a discriminator separates observed from generated samples while the
generator changes its distribution to become harder to distinguish
\citep{goodfellow2014gan}.  Generative Adversarial Distillation
(GAD;~\citealp{ye2025gad}) instantiates this game for conditional text:
teacher responses play the role of observed samples and the autoregressive
student plays the generator.  For any negative response distribution
\(q(\cdot\mid x)\), define the population Bradley--Terry loss
\citep{bradley1952rank}
\begin{equation}
  \mathcal L_D(r_\varphi;q)
  =
  \mathbb E_{\substack{x\sim\mathcal D,\,
                       y_T\sim\tau(\cdot\mid x),\\
                       y_S\sim q(\cdot\mid x)}}
  \left[
    \operatorname{softplus}\!\left(
      r_\varphi(x,y_S)-r_\varphi(x,y_T)
    \right)
  \right].
  \label{eq:method-bt}
\end{equation}
The discriminator minimizes this loss, increasing the score margin between
teacher and student responses.  When \(q=\pi_\theta\), the corresponding
response-level minimax game is
\begin{equation}
  \min_\varphi\max_\theta\;
  \mathcal V(\theta,\varphi)
  =
  \mathcal L_D(r_\varphi;\pi_\theta).
  \label{eq:method-minimax}
\end{equation}
The student maximization pushes probability toward responses receiving higher
discriminator scores.  Because sampled text is discrete, we use the standard
non-saturating surrogate: the discriminator score becomes a sequence-level
reward, and GRPO performs the student update instead of differentiating
through generated tokens~\citep{shao2024deepseekmath}. Concretely, the updated discriminator scores a fresh group
\(y_{t,i}\sim\pi_{\theta_t}(\cdot\mid x)\), \(i=1,\ldots,G\), producing
\(R_{t,i}=r_{\varphi_{t+1}}(x,y_{t,i})\).
GRPO forms the promptwise
advantages
\begin{equation}
  A_{t,i}
  =
  \frac{R_{t,i}-\bar R_t}{s_{R,t}+\varepsilon}, 
  \bar R_t=G^{-1}\sum_{j=1}^{G}R_{t,j}, 
  s_{R,t}=\operatorname{std}_{j}(R_{t,j}),
  \label{eq:grpo_reward}
\end{equation}
and applies the standard clipped policy objective with a KL penalty to a
frozen reference policy ~\citep{ye2025gad}.  Only fresh student responses enter this update.

\textbf{Why the reward transfers teacher behavior.}
Theorem~\ref{thm:bayes-score} shows that the population-optimal reward is
\(r_q^\star(x,y)=\log[\tau(y\mid x)/q(y\mid x)]+c(x)\). So it favors
responses characteristic of the teacher relative to the student negatives.
GRPO increases the probability of fresh responses receiving higher
within-group rewards, transferring teacher-compatible behavior without
token-level supervision.

\subsection{Persistent negatives for a moving reward target}
\label{sec:persistent-distribution}

\begin{figure}[t]
  \centering
  \includegraphics[width=\linewidth]
  {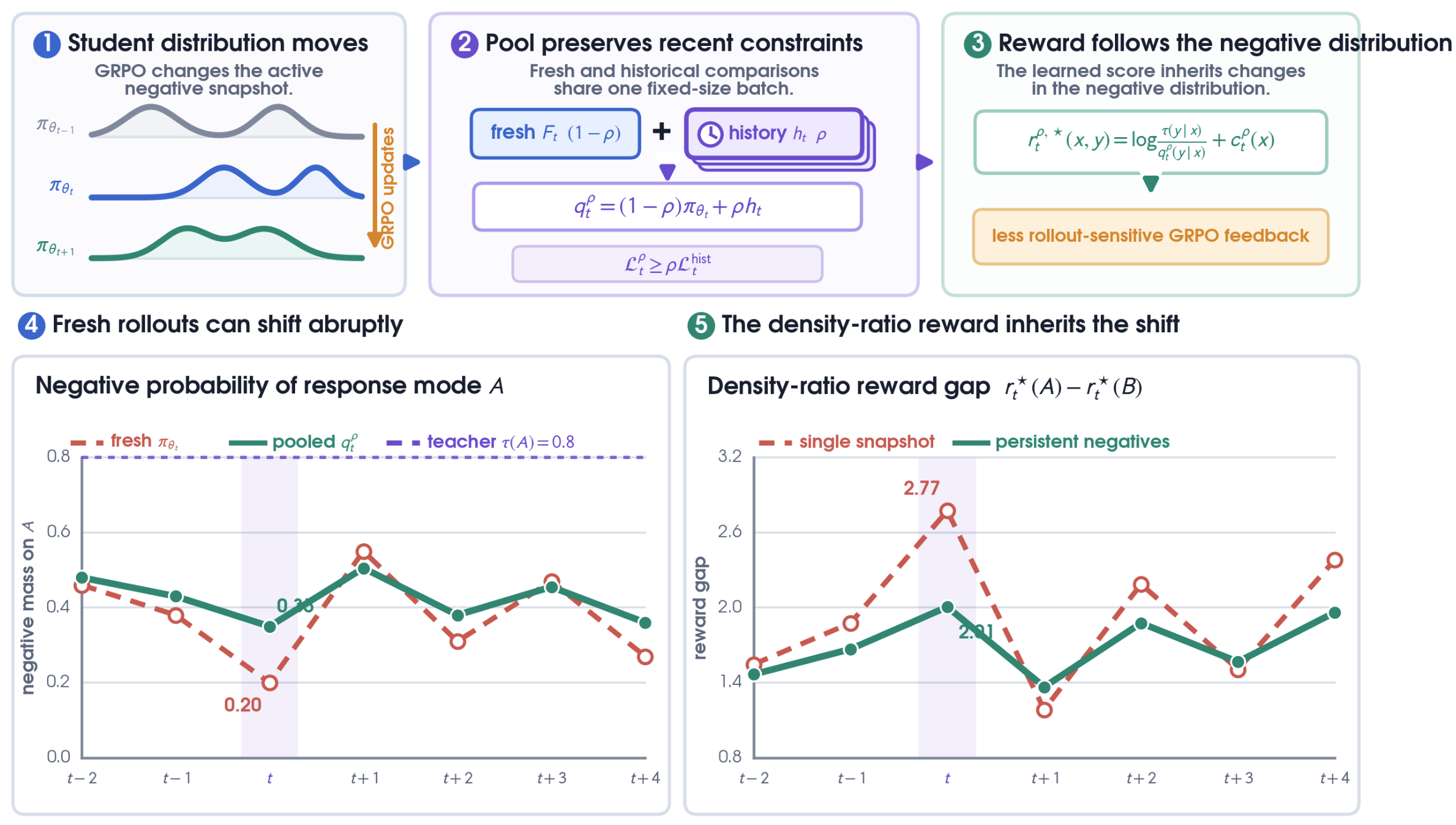}
  \caption{\textbf{Why persistent negatives help.}
  GRPO updates continually change the student distribution, so a
  discriminator trained from a single policy snapshot can lose constraints
  supplied by earlier student responses. Persistent-negative training uses
  \(q_t^\rho=(1-\rho)\pi_{\theta_t}+\rho h_t\), allowing fresh and historical
  comparisons to share a fixed-size discriminator batch. The historical term
  remains active in the discriminator objective, making the learned reward
  less dependent on any single rollout and providing steadier feedback for
  GRPO. Reward trajectories are schematic.}
  \label{fig:why-persistent-negatives}
\end{figure}

The negative distribution in Eq.~\eqref{eq:method-bt} determines the reward
being learned.  Using only \(q=\pi_{\theta_t}\) makes this target move after
every policy update: response modes absent from the latest rollout stop
contributing to discriminator training even when the student can produce them
again.  The learned reward can therefore react strongly to one policy
snapshot and forget previously observed failure modes.

We address this moving-target problem with persistent negative
distribution.  Let \(P_t\) be a bounded pool of complete comparisons
\((x,y_T,y_S)\), and \(h_t(\cdot\mid x)\) be the historical student
response distribution represented by that pool.  Once the pool is filled,
the discriminator negative distribution is
\begin{equation}
  q_t^\rho(\cdot\mid x)
  =
  (1-\rho)\pi_{\theta_t}(\cdot\mid x)
  +
  \rho h_t(\cdot\mid x),
  \qquad \rho\in[0,1).
  \label{eq:method-negative-mixture}
\end{equation}
Figure~\ref{fig:why-persistent-negatives} summarizes the mechanism.
As GRPO updates move the student distribution, the negative distribution
constructed from a single policy snapshot also moves. Mixing fresh responses
with \(h_t\) keeps recent comparisons active in the discriminator objective,
which can make the learned reward less sensitive to any individual rollout. Retaining the complete comparison preserves the prompt and teacher response
that define the pairwise target for each historical student response. The primary \emph{live pool} retains recent comparisons and refreshes them during
training. Pooled comparisons train only \(r_\varphi\); GRPO continues to use
fresh groups sampled from \(\pi_{\theta_t}\).  The special case \(\rho=0\)
gives the nonpersistent comparator.

\subsection{Alternating optimization}
\label{sec:pool-algorithm}

Let \(B\) be the discriminator batch size, \(C\) the pool capacity, and
\(F_t\) the fresh prompt-matched comparisons constructed at iteration \(t\).
We initialize \(P_0\) as empty.
% ; for a static pool, \(P_0\) is pre-collected
% and remains unchanged.  
During pool startup,
%During live-pool startup,
\(k_t=\min\{\lfloor\rho B\rfloor,|P_t|\}\) pooled rows are available, so the
realized mixture weight is \(\rho_t=k_t/B\); the remaining slots are filled
with fresh comparisons.  Algorithm~\ref{alg:persistent-negative} gives the
complete alternating update.

\begin{algorithm}[t]
\caption{Persistent-negative adversarial distillation}
\label{alg:persistent-negative}
\begingroup
\small
\setlength{\baselineskip}{0.92\baselineskip}
\begin{algorithmic}[1]
\Require Teacher-response data \(\mathcal T\), student \(\pi_{\theta_0}\), and
score model \(r_{\varphi_0}\)
\Require Pool \(P_0\), \(\rho\), \(B\), \(C\)
%, and mode \(M\in\{\mathrm{live},\mathrm{static}\}\)
\For{\(t=0,1,\ldots,N-1\)}
\State Sample prompt--teacher pairs \((x,y_T)\sim\mathcal T\)
\State Sample fresh groups
\(y_{t,i}\sim\pi_{\theta_t}(\cdot\mid x)\), \(i=1,\ldots,G\)
\State Form fresh comparisons \(F_t\gets\{(x,y_T,y_{t,i})\}\)
\State \(k_t\gets\min\{\lfloor\rho B\rfloor,|P_t|\}\)
\State \(B_t^D\gets
\Call{Sample}{F_t,B-k_t}\oplus\Call{Sample}{P_t,k_t}\)
\State \(\varphi_{t+1}\gets
\Call{Disc Update}{\varphi_t,B_t^D}\)
\State \(R_{t,i}\gets r_{\varphi_{t+1}}(x,y_{t,i})\) for fresh responses
\State Compute \(A_{t,i}\) in (\ref{eq:grpo_reward})
\State \(\theta_{t+1}\gets
\Call{GRPO Update}{\theta_t,\{y_{t,i},A_{t,i}\}}\)
%\If{\(M=\mathrm{live}\)}
\State \(P_{t+1}\gets\Call{Keep Recent}{P_t\oplus F_t,C}\)
%\Else
%  \State \(P_{t+1}\gets P_t\)
%\EndIf
\EndFor
\State \Return \(\pi_{\theta_N}\)
\end{algorithmic}
\endgroup
\end{algorithm}

Pooled comparisons replace rather than augment fresh comparisons, so every
variant uses \(B\) discriminator comparisons and the same number of
discriminator optimizer steps per iteration.  Historical data affect the
student only through the updated score model.  Moreover, because GRPO removes
a promptwise reward offset, persistence changes the policy update only when it
changes the ordering or relative spacing of scores within a fresh group.  The
following section analyzes the resulting reward target, discriminator
stability, and GRPO signal.

\section{Theoretical Analysis}
\label{sec:theory}

Our analysis provides three complementary results. The Bayes-optimal
Bradley--Terry reward is a teacher-to-negative log-density ratio. Persistent
comparisons lower-bound both the historical loss contribution and the
discriminator curvature. Finally, a bias--variance condition characterizes
when persistence reduces reward MSE. The three results formalize the mechanism illustrated in
Figure~\ref{fig:why-persistent-negatives}.

\textbf{Notation.}
We use distributions and discriminator loss defined in
Section~\ref{sec:method}. In particular, \(q_t^\rho\) is given by
Eq.~\eqref{eq:method-negative-mixture}, and \(\mathcal L_D(r;q)\) denotes
Eq.~\eqref{eq:method-bt} evaluated with score function \(r\). For an arbitrary
negative distribution \(q\), let
\(r_q^\star\in\arg\min_r\mathcal L_D(r;q)\) denote a population-optimal score
over the unrestricted class of real-valued functions. We abbreviate
\(r_t^\star:=r_{\pi_{\theta_t}}^\star\) and
\(r_t^{\rho,\star}:=r_{q_t^\rho}^\star\). Because the loss depends only on
score differences, these optima are defined up to a prompt-dependent additive
constant.

\subsection{The Bayes-optimal score is a density ratio}

\begin{theorem}[Bayes-optimal discriminator score]
\label{thm:bayes-score}
Fix a prompt \(x\). Suppose the response space is finite and
\(\tau(y\mid x),q(y\mid x)>0\) for every response \(y\). Over the unrestricted
score class,
\begin{equation}
  r_q^\star(x,y)
  =
  \log\frac{\tau(y\mid x)}{q(y\mid x)}
  +c_q(x),
  \label{eq:bayes-density-ratio}
\end{equation}
where \(c_q(x)\) is independent of \(y\). Consequently, we have
\begin{align}
  &r_t^\star(x,y)
  =
  \log\frac{\tau(y\mid x)}
            {\pi_{\theta_t}(y\mid x)}
  +c_t(x), \text{\ \ and\ \ } 
  r_t^{\rho,\star}(x,y)
  =
  \log\frac{\tau(y\mid x)}
            {q_t^\rho(y\mid x)}
  +c_t^\rho(x).
\end{align}

\end{theorem}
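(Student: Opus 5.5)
The plan is to treat the problem prompt by prompt. With \(x\) fixed, the score \(r(x,\cdot)\) is a finite vector \((r(y))_{y\in\mathcal Y}\). The conditional loss is
\[
\ell(r)=\sum_{a,b\in\mathcal Y}\tau(a\mid x)\,q(b\mid x)\,\operatorname{softplus}\bigl(r(b)-r(a)\bigr).
\]
The population loss \(\mathcal L_D(r;q)\) is the \(\mathcal D\)-average of \(\ell\), and the score class is unrestricted, so it can be minimized separately for each \(x\). I would then argue in three steps: \(\ell\) is convex; the candidate \(r^\circ(y)=\log[\tau(y\mid x)/q(y\mid x)]\) is a stationary point; and the minimizer is unique up to an additive constant.

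\emph{Convexity.} Each summand is softplus composed with the linear map \(r\mapsto r(b)-r(a)\), so it is convex, and \(\ell\) is a nonnegative combination of such terms. Write \(\sigma\) for the logistic function. The Hessian is
\[
\sum_{a\neq b}\tau(a)q(b)\,\sigma'\bigl(r(b)-r(a)\bigr)\,(e_b-e_a)(e_b-e_a)^\top .
\]
This is a weighted graph Laplacian on the complete graph over \(\mathcal Y\). By the positivity assumption, every edge weight is strictly positive. Hence the Hessian's null space is exactly the constant vectors. So \(\ell\) is strictly convex on the quotient by constants, and any stationary point is the unique global minimizer modulo an additive constant. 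This is the source of the term \(c_q(x)\).

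\emph{Stationarity.} Differentiating \(\ell\) gives
\[
\frac{\partial \ell}{\partial r(y)}=\sum_a \tau(a)q(y)\,\sigma\bigl(r(y)-r(a)\bigr)-\sum_b \tau(y)q(b)\,\sigma\bigl(r(b)-r(y)\bigr).
\]
At \(r^\circ\) we have
\[
\sigma\bigl(r^\circ(y)-r^\circ(a)\bigr)=\frac{\tau(y)q(a)}{\tau(y)q(a)+\tau(a)q(y)}.
\]
Substituting this into the first sum gives summands \(\tau(a)q(y)\tau(y)q(a)/(\tau(y)q(a)+\tau(a)q(y))\). Substituting into the second sum (relabel \(b\) as \(a\)) gives summands \(\tau(y)q(a)\tau(a)q(y)/(\tau(a)q(y)+\tau(y)q(a))\). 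These are identical term by term, so the gradient vanishes.

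This step is the main obstacle to check. The loss is pairwise Bradley--Terry rather than pointwise logistic classification, so the density-ratio form is not the textbook GAN argument. The point to verify is that the pairwise symmetry makes the two sums cancel exactly. The positivity assumption ensures the logarithm is finite and the Hessian weights are strictly positive.

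\emph{Consequences.} Combining the three steps gives \(r_q^\star=r^\circ+c_q(x)\). The two displayed consequences follow by substitution. Taking \(q=\pi_{\theta_t}\) gives the formula for \(r_t^\star\). Taking \(q=q_t^\rho\) gives the formula for \(r_t^{\rho,\star}\); this requires \(q_t^\rho(y\mid x)>0\), which holds because \(1-\rho>0\) and \(\pi_{\theta_t}(y\mid x)>0\).
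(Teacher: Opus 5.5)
Your proof is correct. It uses the same ingredients as the paper's proof: first-order conditions giving log-ratio differences, convexity, and positivity-induced connectivity for uniqueness. The difference is that you organize them globally, where the paper works pairwise.

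The paper groups the loss into unordered-pair contributions $g_{ij}(d)=\tau_iq_j\operatorname{softplus}(-d)+\tau_jq_i\operatorname{softplus}(d)$, with $d=r_i-r_j$. Each $g_{ij}$ is a strictly convex function of one variable, minimized at $d=\log(\tau_i/q_i)-\log(\tau_j/q_j)$. The density-ratio score meets all these conditions at once, so it minimizes the loss termwise. The paper then gets uniqueness up to a constant from connectivity of the comparison graph.

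In that framing, the step you flagged as the main obstacle is automatic. Your term-by-term cancellation in the full gradient is exactly the statement $g_{ij}'=0$ for each pair, so no separate matching of the two sums is needed.

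Your route costs a full gradient and Hessian computation. In return it gives a more explicit uniqueness argument than the paper's brief connectivity remark: strict convexity on the quotient by constants, via the positively weighted Laplacian. This is the same quadratic-form argument the paper itself uses later, in the proof of Proposition~\ref{thm:pool-lower-bound}.

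Your observation that $q_t^\rho>0$ because $1-\rho>0$ also makes explicit a positivity condition the paper leaves implicit when specializing to $q=q_t^\rho$.
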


% \paragraph{Proof sketch.}
% For two responses \(y,z\), group the two Bradley--Terry terms containing their
% score difference \(d=r(y)-r(z)\).  The first-order condition gives
% \(d=\log[\tau(y)q(z)/(\tau(z)q(y))]
% =\log[\tau(y)/q(y)]-\log[\tau(z)/q(z)]\).
% These pairwise optima are simultaneously realized by
% Eq.~\eqref{eq:bayes-density-ratio}.  The complete proof is in
% Appendix~\ref{app:theory-proofs}.

\textbf{Example.}
Consider two response modes, \(A\) and \(B\), with
\(\tau(A,B)=(0.8,0.2)\) and
\(\pi_{\theta_t}(A,B)=(0.2,0.8)\).  Under fresh-negative training,
\(
  r_t^\star(A)-r_t^\star(B)
  =
  \log\frac{0.8}{0.2}
  -
  \log\frac{0.2}{0.8}
  =
  \log 16
  \approx2.77.
\)
If \(h_t(A,B)=(0.5,0.5)\) and \(\rho=0.5\), then
\(q_t^\rho(A)=0.5(0.2)+0.5(0.5)=0.35\) and
\(q_t^\rho(B)=0.5(0.8)+0.5(0.5)=0.65\).  The corresponding gap is
\(\log[(0.8/0.35)/(0.2/0.65)]=\log7.43\approx2.01\).
Here the current rollout may make the correct mode look unusually rare; the pool
remembers that recent students produced it half of the time and avoids turning
this transient shift into an excessively large reward gap.  It preserves the
useful ordering \(A>B\) while making the reward less reactive to one policy
snapshot.  With realistic multi-mode responses, pooling changes different
gaps non-uniformly and therefore can also change the standardized GRPO advantages.

\subsection{Persistent negatives anchor the discriminator}
  \label{sec:theory-anchor}

  For a fixed prompt, let scores lie in the centered subspace
  $\mathcal S=\{r:\mathbf 1^\top r=0\}$. Define
  $\mathcal L_t^{\mathrm{fresh}}(r)=\mathcal L_D(r;\pi_{\theta_t})$ and
  $\mathcal L_t^{\mathrm{hist}}(r)=\mathcal L_D(r;h_t)$. At the population
  level, the negative-mixture distribution gives
  $\mathcal L_t^\rho=(1-\rho)\mathcal L_t^{\mathrm{fresh}}
  +\rho\mathcal L_t^{\mathrm{hist}}$.

  \begin{proposition}[Persistent-negative lower bound]
  \label{thm:pool-lower-bound}
  For every score vector $r$ and $\rho\in[0,1)$, we have
$
    \mathcal L_t^\rho(r)
    \ge \rho\mathcal L_t^{\mathrm{hist}}(r).
$
  Moreover, suppose that, on a bounded convex subset of $\mathcal S$,
  $\nabla^2\mathcal L_t^{\mathrm{fresh}}(r)
  \succeq\mu_{\mathrm{fresh}}I_{\mathcal S}$ and
  $\nabla^2\mathcal L_t^{\mathrm{hist}}(r)
  \succeq\mu_{\mathrm{hist}}I_{\mathcal S}$, where
  $\mu_{\mathrm{fresh}},\mu_{\mathrm{hist}}\ge0$. Then
$
    \nabla^2\mathcal L_t^\rho(r)
    \succeq \rho\mu_{\mathrm{hist}}I_{\mathcal S}.
$
  \end{proposition}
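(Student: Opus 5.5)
Both claims follow from the decomposition $\mathcal L_t^\rho=(1-\rho)\mathcal L_t^{\mathrm{fresh}}+\rho\mathcal L_t^{\mathrm{hist}}$ stated just before the proposition. I would first confirm this identity at the population level. For a fixed prompt, the expectation in Eq.~\eqref{eq:method-bt} is linear in the negative distribution $q$, because the integrand does not depend on $q$ and only the sampling law of $y_S$ changes. Substituting $q_t^\rho=(1-\rho)\pi_{\theta_t}+\rho h_t$ from Eq.~\eqref{eq:method-negative-mixture} and splitting the sum over $y_S$ then gives the identity. With the finite response space assumed in Theorem~\ref{thm:bayes-score}, this is an exchange of finite sums and needs no further justification.

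For the lower bound, note that $\operatorname{softplus}(u)=\log(1+e^u)>0$ for every real $u$. Hence $\mathcal L_t^{\mathrm{fresh}}(r)\ge 0$ for every score vector $r$, being an expectation of a positive quantity. Since $\rho<1$, the weight $1-\rho$ is positive, so $(1-\rho)\mathcal L_t^{\mathrm{fresh}}(r)\ge0$. Dropping this term from the decomposition gives $\mathcal L_t^\rho(r)\ge\rho\mathcal L_t^{\mathrm{hist}}(r)$. This holds for all $r$, not only on $\mathcal S$, because no convexity is used.

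For the curvature bound, I would work on the given bounded convex subset of $\mathcal S$. Each loss is a finite sum of smooth functions of $r$, so it is $C^2$ there. Differentiating the decomposition twice gives
\[
\nabla^2\mathcal L_t^\rho(r)=(1-\rho)\nabla^2\mathcal L_t^{\mathrm{fresh}}(r)+\rho\nabla^2\mathcal L_t^{\mathrm{hist}}(r),
\]
with both Hessians interpreted as operators on $\mathcal S$. Since $1-\rho>0$ and $\nabla^2\mathcal L_t^{\mathrm{fresh}}\succeq\mu_{\mathrm{fresh}}I_{\mathcal S}\succeq0$, the first term is positive semidefinite. Adding the PSD term to $\rho\nabla^2\mathcal L_t^{\mathrm{hist}}\succeq\rho\mu_{\mathrm{hist}}I_{\mathcal S}$ preserves the Loewner bound, which yields $\nabla^2\mathcal L_t^\rho(r)\succeq\rho\mu_{\mathrm{hist}}I_{\mathcal S}$. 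The fresh term can in fact be kept, giving the sharper bound $((1-\rho)\mu_{\mathrm{fresh}}+\rho\mu_{\mathrm{hist}})I_{\mathcal S}$.

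None of these steps is a real obstacle. The only point that needs care is the restriction to $\mathcal S$. The softplus losses depend only on score differences, so they are invariant along $\mathbf 1$, and their full Hessians are singular in that direction. The Hessian inequalities must therefore be read as quadratic forms on $v\in\mathcal S$. Checking $v^\top\nabla^2\mathcal L_t^\rho v\ge\rho\mu_{\mathrm{hist}}\|v\|^2$ for such $v$ settles this. The hypothesis $\mu_{\mathrm{fresh}}\ge0$ is all that is needed to discard the fresh term.
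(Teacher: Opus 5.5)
Your proposal is correct and follows the paper's proof essentially step for step. Linearity in the negative distribution gives the mixture decomposition, nonnegativity of softplus removes the fresh term for the value bound, and the Hessian of the mixture together with $(1-\rho)\nabla^2\mathcal L_t^{\mathrm{fresh}}\succeq 0$ gives the curvature bound; the paper records the same sharper intermediate $[(1-\rho)\mu_{\mathrm{fresh}}+\rho\mu_{\mathrm{hist}}]I_{\mathcal S}$, and its extra connectedness argument for $\mu_{\mathrm{hist}}>0$ is not needed for the statement.
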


  % \paragraph{Proof.}
  % The Bradley--Terry loss is nonnegative, so the mixture decomposition gives
  % Eq.~\eqref{eq:pool-loss-lower-bound}. The Hessian follows the same mixture
  % decomposition, which proves Eq.~\eqref{eq:pool-curvature-lower-bound}.
  % \hfill$\square$

  \textbf{Example.}
  Consider a prompt with three response modes: a correct answer $A$, a
  hallucinated answer $B$, and an unhelpful refusal $C$. After a policy update,
  fresh negatives may shift abruptly from $B$ to $C$. Fresh-only training then
  provides little signal about $B$, even though the student may produce it
  again. A live pool retains earlier teacher--$B$ comparisons. With
  $\rho=0.5$, the pooled objective satisfies
  $\mathcal L_t^\rho(r)\ge0.5\mathcal L_t^{\mathrm{hist}}(r)$, so achieving low
  pooled loss requires continued control of the historical comparisons
  involving $B$. If those comparisons provide curvature in the corresponding
  score direction, they contribute at least
  $0.5\mu_{\mathrm{hist}}$ to the pooled curvature.

  % =============================================================================
  \subsection{Bias--variance tradeoff of persistent negatives}
  \label{sec:theory-bias-variance}

For a reward function $f$, define the promptwise-centered seminorm as $
    \|f\|_{t,c}^2
    :=
    \mathbb E_{x\sim\mathcal D,\,y\sim\pi_{\theta_t}(\cdot\mid x)}
    \left[
      \left(
        f(x,y)
        -
        \mathbb E_{y'\sim\pi_{\theta_t}(\cdot\mid x)}f(x,y')
      \right)^2
    \right].$ This removes prompt-dependent score offsets, which do not affect pairwise
  comparisons or promptwise GRPO advantages.

  \begin{proposition}[Reward MSE under controlled pool bias]
  \label{thm:pool-bias-variance}
  Let $\widehat r_t^{\mathrm{fresh}}$ and
  $\widehat r_t^{\mathrm{pool}}$ be learned reward functions, and let
  $r_t^\star$ be the ideal fresh-negative reward from
  Theorem~\ref{thm:bayes-score}. Define
  $V_a=\mathbb E\|\widehat r_t^a-\mathbb E\widehat r_t^a\|_{t,c}^2$ for
  $a\in\{\mathrm{fresh},\mathrm{pool}\}$ and
  $B_{\mathrm{pool}}^2
  =\|\mathbb E\widehat r_t^{\mathrm{pool}}-r_t^\star\|_{t,c}^2$. Assume that $\mathbb E\widehat r_t^{\mathrm{fresh}}=r_t^\star$ and that, for
  some $\Delta_t>0$,
  $V_{\mathrm{pool}}\le V_{\mathrm{fresh}}-\Delta_t$ and
  $B_{\mathrm{pool}}^2<\Delta_t$. Then we have,
$
    \operatorname{MSE}(\widehat r_t^{\mathrm{fresh}},r_t^\star)
    =V_{\mathrm{fresh}},
    \qquad
    \operatorname{MSE}(\widehat r_t^{\mathrm{pool}},r_t^\star)
    =V_{\mathrm{pool}}+B_{\mathrm{pool}}^2.
$
  Consequently,
$
    \operatorname{MSE}(\widehat r_t^{\mathrm{fresh}},r_t^\star)
    -
    \operatorname{MSE}(\widehat r_t^{\mathrm{pool}},r_t^\star)
    \ge \Delta_t-B_{\mathrm{pool}}^2>0.
$
  \end{proposition}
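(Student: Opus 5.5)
The proof is a standard bias--variance decomposition in the Hilbert-type seminorm $\|\cdot\|_{t,c}$. Define $\operatorname{MSE}(\widehat r,r^\star):=\mathbb E\|\widehat r-r^\star\|_{t,c}^2$, where the outer expectation is over the randomness of the learned reward (the training data). The norm is evaluated under the fixed distribution $x\sim\mathcal D$, $y\sim\pi_{\theta_t}$.

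First, write $\|f\|_{t,c}^2=\mathbb E_{x,y}[(Cf)(x,y)^2]$, where $(Cf)(x,y)=f(x,y)-\mathbb E_{y'\sim\pi_{\theta_t}(\cdot\mid x)}f(x,y')$. The operator $C$ is linear, so $\|\cdot\|_{t,c}$ is induced by the positive semidefinite bilinear form $\langle f,g\rangle_{t,c}=\mathbb E_{x,y}[(Cf)(Cg)]$. Second, for $a\in\{\mathrm{fresh},\mathrm{pool}\}$, split
\[
\widehat r_t^a-r_t^\star=(\widehat r_t^a-\mathbb E\widehat r_t^a)+(\mathbb E\widehat r_t^a-r_t^\star).
\]
Expanding the square gives
\[
\mathbb E\|\widehat r_t^a-r_t^\star\|_{t,c}^2=V_a+\|\mathbb E\widehat r_t^a-r_t^\star\|_{t,c}^2+2\,\mathbb E\langle \widehat r_t^a-\mathbb E\widehat r_t^a,\ \mathbb E\widehat r_t^a-r_t^\star\rangle_{t,c}.
\]
The second argument of the cross term is deterministic. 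Applying Fubini to swap the outer expectation with $\mathbb E_{x,y}$, and using linearity of $C$, the cross term becomes $\langle \mathbb E\widehat r_t^a-\mathbb E\widehat r_t^a,\cdot\rangle_{t,c}=0$.

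For $a=\mathrm{fresh}$, the assumption $\mathbb E\widehat r_t^{\mathrm{fresh}}=r_t^\star$ kills the bias term, so $\operatorname{MSE}(\widehat r_t^{\mathrm{fresh}},r_t^\star)=V_{\mathrm{fresh}}$. For $a=\mathrm{pool}$, the same decomposition gives $V_{\mathrm{pool}}+B_{\mathrm{pool}}^2$. Subtracting and using $V_{\mathrm{pool}}\le V_{\mathrm{fresh}}-\Delta_t$ yields a difference of at least $\Delta_t-B_{\mathrm{pool}}^2$, which is strictly positive by $B_{\mathrm{pool}}^2<\Delta_t$.

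The only delicate step is justifying the vanishing cross term. This requires finite second moments, so that the expectations exist and Fubini applies; I would state this integrability as an implicit standing assumption, since $V_a$ is assumed to be finite. It also requires treating $\mathbb E\widehat r_t^a$ as the pointwise mean function $(x,y)\mapsto\mathbb E[\widehat r_t^a(x,y)]$. If the learned reward and $\pi_{\theta_t}$ are both random, I would condition on $\theta_t$ throughout, so that the seminorm is fixed and all expectations are taken with respect to the discriminator-training randomness only.
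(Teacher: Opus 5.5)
Your proposal is correct and follows essentially the same route as the paper's proof: introduce the promptwise-centering operator $C_t$, split $\widehat r_t^a-r_t^\star$ into a zero-mean fluctuation plus a deterministic bias, show the cross term vanishes by linearity, and subtract using $V_{\mathrm{pool}}\le V_{\mathrm{fresh}}-\Delta_t$ and $B_{\mathrm{pool}}^2<\Delta_t$. Your remarks on finite second moments, Fubini, and conditioning on $\theta_t$ spell out integrability and measurability conditions that the paper leaves implicit.
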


  % \paragraph{Proof.}
  % The bias--variance decomposition gives the two MSE expressions in
  % Eq.~\eqref{eq:reward-mse-decomposition}. Substituting
  % $V_{\mathrm{pool}}\le V_{\mathrm{fresh}}-\Delta_t$ gives
  % Eq.~\eqref{eq:pool-mse-advantage}.
  % \hfill$\square$

Appendix~\ref{app:theory-proofs} provides complete proofs of
above theorems and propositions. 
% =============================================================================

\section{Experimental setup}
\label{sec:setup}
% =============================================================================

\subsection{Models, data, and training}

We evaluate two comparably sized student models from different families:
Qwen2.5-7B-Instruct~\citep{qwen2024qwen25} and
Llama-3.1-8B-Instruct~\citep{grattafiori2024llama}, so that we can test whether the effect of
persistent negatives transfers across model families rather than depending on
a particular architecture or initialization. For each family, we evaluate the
initial instruction-tuned checkpoint as the undistilled student.

The teacher-response corpus contains $192{,}014$ responses generated by
GPT-5 Chat~\citep{openai2025gpt5chat} and released with GAD over
LMSYS-Chat-1M~\citep{ye2025gad,zheng2024lmsyschat}. We use only sampled
teacher responses: we neither query the teacher during training nor access its token probabilities. Training begins with one epoch of supervised fine-tuning on the teacher responses, during which the discriminator is also initialized. This warmup is followed by two adversarial epochs.

Each adversarial step samples eight student responses per prompt. Our
persistent-negative configuration uses mixture weight $\rho=0.5$, uniform
pool sampling, and a bounded live pool containing $4,096$ complete comparisons
$(x,y_T,y_S)$. Once sufficient history is available, half of each
discriminator batch is sampled from the pool. Pooled comparisons replace
current comparisons within the fixed discriminator batch size $B=256$.
Consequently, GAD and persistent-negative training use the same number of
discriminator examples and optimizer steps. Complete optimization,
sequence-length, and hardware configurations appear in
Appendix~\ref{app:config}.

For each student family, we train GAD and our method with three paired random
seeds. Within each pair, both methods share the warmup checkpoint, data order,
initialization, and optimization settings. Across seeds, we vary data
shuffling, response sampling, and optimizer randomness. Both student families use the same, one-epoch supervised
warmup, two adversarial epochs, group size $G=8$, discriminator batch size
$B=256$, mixture weight $\rho=0.5$, and pool capacity of 4096 comparisons.
We change only model-specific tokenization and chat templates.
Memory-related microbatching may differ, but the effective batch size is held
fixed.

\subsection{Compared methods}

For each student model, we compare the undistilled checkpoint, SeqKD~\citep{kim2016seqkd}, standard
GAD~\citep{ye2025gad}, and our persistent-negative method. SeqKD performs supervised
fine-tuning on teacher responses without a discriminator or GRPO. GAD and
persistent-negative training resume from the same jointly initialized warmup
checkpoint. Standard GAD uses only current-policy negatives, corresponding to
$\rho=0$. Our method instead replaces half of each discriminator batch with
historical comparisons sampled from the live pool. 
Appendix~\ref{app:config} provides the complete configurations.

\subsection{Evaluation and inference}

We generate greedy responses for LMSYS ($n=479$ prompts), Dolly
($n=500$; \citealp{conover2023dolly}), Vicuna/MT-Bench
($n=80$; \citealp{zheng2023judging}), and Self-Instruct
($n=252$; \citealp{wang2023selfinstruct}). Evaluation prompts
are excluded from the distillation cohort.   

All distillation prompts are drawn from LMSYS-Chat-1M. The held-out LMSYS
evaluation set therefore measures in-domain generalization. Dolly,
Vicuna/MT-Bench, and Self-Instruct are not used during training and provide
cross-dataset out-of-domain evaluations.

An offline judge assigns scores from 1 to 10 to the candidate and reference
answers. Following \citet{ye2025gad}, we report
candidate/(candidate+reference), for which $0.5$ denotes parity. We use three
judges to reduce dependence on the preferences and calibration of any single
evaluator. Qwen2.5-72B-Instruct~\citep{qwen2024qwen25} provides a large,
established Qwen-family judge, while
Qwen3.5-27B~\citep{qwen2026qwen35} tests consistency under a newer Qwen
generation. Gemma-3-27B-IT~\citep{gemmateam2025gemma3} provides a
cross-family judge, allowing us to test whether the conclusions persist
outside the Qwen model family.

All three judges score the same fixed student responses using the same
evaluation rubric and deterministic decoding. Because their absolute scoring
scales need not be calibrated, we report results separately for each judge
and compare methods only within judge. The offline judges are independent of
the discriminator used during training.

We also measure pre-update discriminator accuracy on fresh current-policy
comparisons. Every arm uses the same prompt--teacher slice, but student
responses are generated by the corresponding current policy. This diagnostic
therefore describes the coupled student--discriminator process rather than
isolating discriminator variation alone.

% =============================================================================
\section{Results}
\label{sec:results}
% =============================================================================

\subsection{Persistent negatives improve judged-chat performance}
\label{sec:chat}

Persistent-negative training improves judged-chat performance consistently
across student and judge families. With Qwen2.5-7B-Instruct as the student,
our method outperforms GAD on every evaluation set under all three judges
(Table~\ref{tab:qwen-chat}). The equal-set-weighted gains over GAD are
$+1.1\%$ under Qwen2.5-72B-Instruct, $+1.3\%$ under Qwen3.5-27B, and
$+1.9\%$ under Gemma-3-27B-IT. Relative to the undistilled student, the
corresponding gains are $+1.7\%$, $+2.4\%$, and $+4.6\%$.

\begin{table*}[t]
  \centering
  \footnotesize
  \caption{\textbf{Judged-chat performance with Qwen2.5-7B-Instruct.}
  Scores are percentages, with $50$ denoting parity with the reference.
  Base is the undistilled student, and GAD uses current-only negatives.
  Improvements are reported in percentage points (pp). The average weights the
  four evaluation sets equally.}
  \label{tab:qwen-chat}
  \setlength{\tabcolsep}{5.5pt}
  \renewcommand{\arraystretch}{0.98}

  \begin{tabular}{@{}lcccccc@{}}
  \toprule
  \textbf{Dataset}
  & \textbf{Base}
  & \textbf{SeqKD}
  & \textbf{GAD}
  & \cellcolor{oursblue}\shortstack{\textbf{Persistent-negative}\\
                                    \textbf{(ours)}}
  & \shortstack{\textbf{$\Delta$ vs.\ Base $\uparrow$}\\\textbf{(pp)}}
  & \shortstack{\textbf{$\Delta$ vs.\ GAD $\uparrow$}\\\textbf{(pp)}} \\
  \midrule

  \rowcolor{panelgray}
  \multicolumn{7}{@{}l}{\textbf{Qwen2.5-72B-Instruct judge}} \\
  LMSYS
  & 48.5 & 48.9 & 49.1 & \bestours{50.7} & $+2.2$ & $+1.6$ \\
  Dolly
  & 47.5 & 46.8 & 47.9 & \bestours{49.1} & $+1.6$ & $+1.2$ \\
  Vicuna
  & 48.4 & 48.6 & 49.2 & \bestours{50.2} & $+1.8$ & $+1.0$ \\
  Self-Instruct
  & 48.1 & 48.5 & 48.7 & \bestours{49.4} & $+1.3$ & $+0.7$ \\
  \rowcolor{averagegray}
  \textbf{Average}
  & 48.1 & 48.2 & 48.7 & \bestours{49.9} & $+1.7$ & $+1.1$ \\

  \midrule
  \rowcolor{panelgray}
  \multicolumn{7}{@{}l}{\textbf{Qwen3.5-27B judge}} \\
  LMSYS
  & 39.1 & 39.5 & 39.6 & \bestours{41.6} & $+2.5$ & $+2.0$ \\
  Dolly
  & 42.4 & 42.7 & 42.5 & \bestours{43.4} & $+1.0$ & $+0.9$ \\
  Vicuna
  & 42.0 & 44.7 & 44.3 & \bestours{45.0} & $+3.0$ & $+0.7$ \\
  Self-Instruct
  & 42.4 & 43.2 & 43.9 & \bestours{45.6} & $+3.2$ & $+1.7$ \\
  \rowcolor{averagegray}
  \textbf{Average}
  & 41.5 & 42.5 & 42.6 & \bestours{43.9} & $+2.4$ & $+1.3$ \\

  \midrule
  \rowcolor{panelgray}
  \multicolumn{7}{@{}l}{\textbf{Gemma-3-27B-IT judge}} \\
  LMSYS
  & 41.2 & 42.8 & 44.6 & \bestours{47.2} & $+6.0$ & $+2.6$ \\
  Dolly
  & 43.2 & 43.8 & 45.1 & \bestours{46.8} & $+3.6$ & $+1.7$ \\
  Vicuna
  & 42.7 & 43.6 & 45.9 & \bestours{47.5} & $+4.8$ & $+1.6$ \\
  Self-Instruct
  & 42.4 & 42.7 & 44.6 & \bestours{46.3} & $+3.9$ & $+1.7$ \\
  \rowcolor{averagegray}
  \textbf{Average}
  & 42.4 & 43.2 & 45.1 & \bestours{47.0} & $+4.6$ & $+1.9$ \\

  \bottomrule
  \end{tabular}
  \end{table*}

The improvement is not limited to the LMSYS evaluation set, which shares the
source domain of the training corpus. Persistent-negative training also
outperforms GAD on Dolly, Vicuna, and Self-Instruct, providing evidence of
cross-dataset generalization. Although the judges use different absolute
score scales, they agree on the direction of every comparison.

  \begin{table*}[t]
  \centering
  \footnotesize
  \caption{\textbf{Judged-chat performance with Llama-3.1-8B-Instruct.}}
  \label{tab:llama-chat}
  \setlength{\tabcolsep}{5.5pt}
  \renewcommand{\arraystretch}{0.98}

  \begin{tabular}{@{}lcccccc@{}}
  \toprule
  \textbf{Dataset}
  & \textbf{Base}
  & \textbf{SeqKD}
  & \textbf{GAD}
  & \cellcolor{oursblue}\shortstack{\textbf{Persistent-negative}\\
                                    \textbf{(ours)}}
  & \shortstack{\textbf{$\Delta$ vs.\ Base $\uparrow$}\\\textbf{(pp)}}
  & \shortstack{\textbf{$\Delta$ vs.\ GAD $\uparrow$}\\\textbf{(pp)}} \\
  \midrule

  \rowcolor{panelgray}
  \multicolumn{7}{@{}l}{\textbf{Qwen2.5-72B-Instruct judge}} \\
  LMSYS
  & 46.7 & 47.2 & 48.5 & \bestours{49.4} & $+2.7$ & $+0.9$ \\
  Dolly
  & 45.8 & 45.9 & 46.7 & \bestours{48.2} & $+2.4$ & $+1.5$ \\
  Vicuna
  & 45.2 & 47.7 & 48.8 & \bestours{49.8} & $+4.6$ & $+1.0$ \\
  Self-Instruct
  & 46.3 & 47.2 & 48.2 & \bestours{48.9} & $+2.6$ & $+0.7$ \\
  \rowcolor{averagegray}
  \textbf{Average}
  & 46.0 & 47.0 & 48.1 & \bestours{49.1} & $+3.1$ & $+1.0$ \\

  \midrule
  \rowcolor{panelgray}
  \multicolumn{7}{@{}l}{\textbf{Qwen3.5-27B judge}} \\
  LMSYS
  & 36.5 & 37.9 & 39.6 & \bestours{40.7} & $+4.2$ & $+1.1$ \\
  Dolly
  & 38.1 & 39.2 & 41.2 & \bestours{42.2} & $+4.1$ & $+1.0$ \\
  Vicuna
  & 37.6 & 40.0 & 42.8 & \bestours{44.1} & $+6.5$ & $+1.3$ \\
  Self-Instruct
  & 37.8 & 39.6 & 42.7 & \bestours{43.8} & $+6.0$ & $+1.1$ \\
  \rowcolor{averagegray}
  \textbf{Average}
  & 37.5 & 39.2 & 41.6 & \bestours{42.7} & $+5.2$ & $+1.1$ \\

  \multicolumn{7}{@{}l}{\textbf{Gemma-3-27B-IT judge}} \\
  LMSYS
  & 39.5 & 39.8 & 42.3 & \bestours{45.6} & $+6.1$ & $+3.3$ \\
  Dolly
  & 40.1 & 42.0 & 43.9 & \bestours{44.9} & $+4.8$ & $+1.0$ \\
  Vicuna
  & 39.8 & 41.9 & 43.7 & \bestours{45.8} & $+6.0$ & $+2.1$ \\
  Self-Instruct
  & 39.9 & 42.0 & 42.5 & \bestours{44.2} & $+4.3$ & $+1.7$ \\
  \rowcolor{averagegray}
  \textbf{Average}
  & 39.8 & 41.4 & 43.1 & \bestours{45.1} & $+5.3$ & $+2.0$ \\

  \bottomrule
  \end{tabular}
  \end{table*}

The same pattern transfers to Llama-3.1-8B-Instruct
(Table~\ref{tab:llama-chat}). Persistent-negative training improves the
equal-set-weighted average over GAD by $+1.0\%$ under Qwen2.5-72B-Instruct,
$+1.1\%$ under Qwen3.5-27B, and $+2.0\%$ under Gemma-3-27B-IT. Relative to
the undistilled Llama student, the corresponding gains are $+3.1$, $+5.2\%$,
and $+5.3\%$. The Qwen student retains higher absolute scores, but the
persistent-negative improvement over GAD is similar across the two model
families.

Across both students, all three judges favor persistent-negative training
over GAD on all four evaluation sets, yielding positive point estimates in
all 24 student--judge--dataset comparisons. These comparisons are correlated
because they share prompts, references, and evaluation rubrics; we therefore
treat their directional consistency as robustness evidence rather than as 24
independent tests. 
% In our smaller-cohort analysis, persistent-negative training also exceeds GAD
% in the point-estimate average under both Qwen judges. The positive contrast is
% not confined to prompts for which the persistent-negative checkpoint produces
% longer responses.

\subsection{Persistent negatives produce smoother discriminator trajectories}
\label{sec:stability}

We measure pre-update discriminator accuracy on fresh current-policy
comparisons. In the seed run, persistent-negative training reduces
the temporal standard deviation from $0.087$ to $0.068$, raises the minimum
accuracy from $0.414$ to $0.461$, and reduces below-chance dips from six to
one. Because each student generates its own negatives, this diagnostic
describes the coupled student--discriminator trajectory rather than the
discriminator in isolation. Appendix~\ref{app:discriminator-diagnostics}
reports the complete statistics and common-probe analysis.

% =============================================================================
\section{Conclusion}
% =============================================================================

We introduce persistent-negative adversarial distillation for transferring
  behavior from a black-box teacher without access to token probabilities. A
  sequence-level discriminator learns rewards from prompt-matched teacher and
  student responses, while GRPO updates the student using fresh on-policy
  generations. Standard adversarial distillation trains the discriminator only
  against the current student distribution, so previously observed failure modes
  can disappear from discriminator training as the policy changes. We address
  this moving-target problem by retaining historical student responses as
  persistent negatives. This design preserves broader discriminator supervision
  while maintaining a fully on-policy student update.

\clearpage
\bibliographystyle{assets/plainnat}
\bibliography{refs}

@article{ye2025gad,
  title         = {Black-Box On-Policy Distillation of Large Language Models},
  author        = {Ye, Tianzhu and Dong, Li and Chi, Zewen and Wu, Xun and
                   Huang, Shaohan and Wei, Furu},
  journal       = {arXiv preprint arXiv:2511.10643},
  year          = {2025},
  eprint        = {2511.10643},
  archivePrefix = {arXiv}
}

@article{hinton2015distilling,
  title         = {Distilling the Knowledge in a Neural Network},
  author        = {Hinton, Geoffrey and Vinyals, Oriol and Dean, Jeff},
  journal       = {arXiv preprint arXiv:1503.02531},
  year          = {2015},
  eprint        = {1503.02531},
  archivePrefix = {arXiv}
}

@inproceedings{kim2016seqkd,
  title     = {Sequence-Level Knowledge Distillation},
  author    = {Kim, Yoon and Rush, Alexander M.},
  booktitle = {Proceedings of the 2016 Conference on Empirical Methods in
               Natural Language Processing},
  pages     = {1317--1327},
  year      = {2016}
}

@inproceedings{gu2024minillm,
  title     = {{MiniLLM}: Knowledge Distillation of Large Language Models},
  author    = {Gu, Yuxian and Dong, Li and Wei, Furu and Huang, Minlie},
  booktitle = {International Conference on Learning Representations},
  year      = {2024}
}

@inproceedings{agarwal2024onpolicy,
  title     = {On-Policy Distillation of Language Models: Learning from
               Self-Generated Mistakes},
  author    = {Agarwal, Rishabh and Vieillard, Nino and Zhou, Yongchao and
               Stanczyk, Piotr and Ramos Garea, Sabela and Geist, Matthieu and
               Bachem, Olivier},
  booktitle = {The Twelfth International Conference on Learning Representations},
  year      = {2024}
}

@misc{lu2025opd,
  title        = {On-Policy Distillation},
  author       = {Lu, Kevin and {Thinking Machines Lab}},
  year         = {2025},
  howpublished = {Thinking Machines Lab: Connectionism},
  doi          = {10.64434/tml.20251026},
  url          = {https://thinkingmachines.ai/blog/on-policy-distillation}
}

@article{bradley1952rank,
  title   = {Rank Analysis of Incomplete Block Designs: I. The Method of
             Paired Comparisons},
  author  = {Bradley, Ralph Allan and Terry, Milton E.},
  journal = {Biometrika},
  volume  = {39},
  number  = {3/4},
  pages   = {324--345},
  year    = {1952}
}

@article{shao2024deepseekmath,
  title         = {{DeepSeekMath}: Pushing the Limits of Mathematical Reasoning
                   in Open Language Models},
  author        = {Shao, Zhihong and Wang, Peiyi and Zhu, Qihao and Xu, Runxin
                   and Song, Junxiao and Bi, Xiao and Zhang, Haowei and Zhang,
                   Mingchuan and Li, Y. K. and Wu, Y. and Guo, Daya},
  journal       = {arXiv preprint arXiv:2402.03300},
  year          = {2024},
  eprint        = {2402.03300},
  archivePrefix = {arXiv}
}

@inproceedings{ho2016gail,
  author    = {Ho, Jonathan and Ermon, Stefano},
  title     = {Generative Adversarial Imitation Learning},
  booktitle = {Advances in Neural Information Processing Systems},
  volume    = {29},
  year      = {2016}
}

@inproceedings{kostrikov2019dac,
  author    = {Kostrikov, Ilya and Agrawal, Kumar Krishna and Dwibedi,
               Debidatta and Levine, Sergey and Tompson, Jonathan},
  title     = {Discriminator-Actor-Critic: Addressing Sample Inefficiency and
               Reward Bias in Adversarial Imitation Learning},
  booktitle = {International Conference on Learning Representations},
  year      = {2019}
}

@inproceedings{goodfellow2014gan,
  title     = {Generative Adversarial Nets},
  author    = {Goodfellow, Ian J. and Pouget-Abadie, Jean and Mirza, Mehdi and
               Xu, Bing and Warde-Farley, David and Ozair, Sherjil and
               Courville, Aaron and Bengio, Yoshua},
  booktitle = {Advances in Neural Information Processing Systems},
  volume    = {27},
  year      = {2014}
}

@inproceedings{shrivastava2017simgan,
  title     = {Learning from Simulated and Unsupervised Images through
               Adversarial Training},
  author    = {Shrivastava, Ashish and Pfister, Tomas and Tuzel, Oncel and
               Susskind, Josh and Wang, Wenda and Webb, Russ},
  booktitle = {IEEE Conference on Computer Vision and Pattern Recognition},
  pages     = {2107--2116},
  year      = {2017}
}

@article{efraimidis2006weighted,
  title   = {Weighted Random Sampling with a Reservoir},
  author  = {Efraimidis, Pavlos S. and Spirakis, Paul G.},
  journal = {Information Processing Letters},
  volume  = {97},
  number  = {5},
  pages   = {181--185},
  year    = {2006}
}

@article{qwen2024qwen25,
  title         = {{Qwen2.5} Technical Report},
  author        = {{Qwen Team}},
  journal       = {arXiv preprint arXiv:2412.15115},
  year          = {2024},
  eprint        = {2412.15115},
  archivePrefix = {arXiv}
}

@inproceedings{zheng2024lmsyschat,
  title     = {{LMSYS-Chat-1M}: A Large-Scale Real-World {LLM} Conversation
               Dataset},
  author    = {Zheng, Lianmin and Chiang, Wei-Lin and Sheng, Ying and Li,
               Tianle and Zhuang, Siyuan and Wu, Zhanghao and Zhuang, Yonghao
               and Li, Zhuohan and Lin, Zi and Xing, Eric P. and Gonzalez,
               Joseph E. and Stoica, Ion and Zhang, Hao},
  booktitle = {International Conference on Learning Representations},
  year      = {2024}
}

@inproceedings{zheng2023judging,
  title     = {Judging {LLM}-as-a-Judge with {MT-Bench} and Chatbot Arena},
  author    = {Zheng, Lianmin and Chiang, Wei-Lin and Sheng, Ying and Zhuang,
               Siyuan and Wu, Zhanghao and Zhuang, Yonghao and Lin, Zi and Li,
               Zhuohan and Li, Dacheng and Xing, Eric P. and Zhang, Hao and
               Gonzalez, Joseph E. and Stoica, Ion},
  booktitle = {Advances in Neural Information Processing Systems},
  volume    = {36},
  year      = {2023}
}

@inproceedings{wang2023selfinstruct,
  title     = {Self-Instruct: Aligning Language Models with Self-Generated
               Instructions},
  author    = {Wang, Yizhong and Kordi, Yeganeh and Mishra, Swaroop and Liu,
               Alisa and Smith, Noah A. and Khashabi, Daniel and Hajishirzi,
               Hannaneh},
  booktitle = {Proceedings of the 61st Annual Meeting of the Association for
               Computational Linguistics},
  pages     = {13484--13508},
  year      = {2023}
}

@misc{conover2023dolly,
  title        = {Free Dolly: Introducing the World's First Truly Open
                  Instruction-Tuned {LLM}},
  author       = {Conover, Mike and Hayes, Matt and Mathur, Ankit and Xie,
                  Jianwei and Wan, Jun and Shah, Sam and Ghodsi, Ali and
                  Wendell, Patrick and Zaharia, Matei and Xin, Reynold},
  howpublished = {Databricks blog},
  year         = {2023},
  url          = {https://www.databricks.com/blog/2023/04/12/dolly-first-open-commercially-viable-instruction-tuned-llm}
}

@inproceedings{jiang2023lion,
  title={Lion: Adversarial distillation of proprietary large language models},
  author={Jiang, Yuxin and Chan, Chunkit and Chen, Mingyang and Wang, Wei},
  booktitle={Proceedings of the 2023 Conference on Empirical Methods in Natural Language Processing},
  pages={3134--3154},
  year={2023}
}

@article{xiong2026ovd,
title   = {{OVD}: On-Policy Verbal Distillation},
author  = {Xiong, Jing and Shen, Hui and Gong, Shansan and Cheng, Yuxin
and Shen, Jianghan and Tao, Chaofan and Tan, Haochen and
Bai, Haoli and Shang, Lifeng and Wong, Ngai},
journal = {arXiv preprint arXiv:2601.21968},
year    = {2026},
url     = {https://arxiv.org/abs/2601.21968}
}

@article{fang2026ropd,
title   = {Rubric-Based On-Policy Distillation},
author  = {Fang, Junfeng and Hong, Zhepei and Zheng, Mao and Song, Mingyang
and Li, Gengsheng and Jiang, Houcheng and Zhang, Dan and
Guo, Haiyun and Wang, Xiang and Chua, Tat-Seng},
journal = {arXiv preprint arXiv:2605.07396},
year    = {2026},
url     = {https://arxiv.org/abs/2605.07396}
}

@article{singh2025orpodistill,
title   = {{ORPO-Distill}: Mixed-Policy Preference Optimization for
Cross-Architecture {LLM} Distillation},
author  = {Singh, Aasheesh and Vaddina, Vishal and Birru, Dagnachew},
journal = {arXiv preprint arXiv:2509.25100},
year    = {2025},
note    = {Accepted at the NeurIPS 2025 Workshop on Efficient Reasoning},
url     = {https://arxiv.org/abs/2509.25100}
}

@misc{openai2025gpt5chat,
author       = {{OpenAI}},
title        = {{GPT-5 Chat}},
year         = {2025},
howpublished = {\url{https://developers.openai.com/api/docs/models/gpt-5-chat-latest}},
note         = {Model documentation; accessed September 12, 2026}
}

@article{grattafiori2024llama,
  title={The llama 3 herd of models},
  author={Grattafiori, Aaron and Dubey, Abhimanyu and Jauhri, Abhinav and Pandey, Abhinav and Kadian, Abhishek and Al-Dahle, Ahmad and Letman, Aiesha and Mathur, Akhil and Schelten, Alan and Vaughan, Alex and others},
  journal={arXiv preprint arXiv:2407.21783},
  year={2024}
}

@article{gemmateam2025gemma3,
    title   = {Gemma 3 Technical Report},
    author  = {{Gemma Team} and others},
    journal = {arXiv preprint arXiv:2503.19786},
    year    = {2025},
    url     = {https://arxiv.org/abs/2503.19786}
  }

@misc{qwen2026qwen35,
    title  = {{Qwen3.5}: Towards Native Multimodal Agents},
    author = {{Qwen Team}},
    year   = {2026},
    month  = {February},
    url    = {https://qwen.ai/blog?id=qwen3.5}
  }

\clearpage
\beginappendix
% Appendices shared by both style variants.

% =============================================================================
\section{Implementation and training details}
\label{app:config}
% =============================================================================

\begin{table}[h]
\centering\small
\caption{Training configuration. Pool rows describe the only intervention.}
\label{tab:config}
\begin{tabular}{@{}p{.19\linewidth}p{.76\linewidth}@{}}
\toprule
Component & Configuration \\
\midrule
Student / teacher & Qwen2.5-7B-Instruct / sampled GPT-5-Chat responses \\
Warmup & one epoch of SFT on teacher responses (374 steps at $50\%$);
discriminator jointly initialized; GAD arms resume, SeqKD does not \\
Adversarial phase & two epochs; 748 steps at $50\%$, 492 at $33\%$ \\
Optimization & learning rate $10^{-6}$; batch 256; rollout group 8; temperature 0.8 \\
Lengths & maximum prompt 2048 tokens; maximum response 1536 \\
Regularization & KL coefficient $10^{-3}$; gradient clip 0.2; discriminator fp32 \\
Reference pool & 4096 BT rows; $\rho=0.5$; uniform sampling \\
SeqKD & learning rate $5\times10^{-6}$; batch 256; no discriminator or GRPO \\
Hardware & one node; $8\times$H200; tensor parallelism 1 \\
\bottomrule
\end{tabular}
\end{table}

A pool entry is a Bradley--Terry row $(x,y_T,y_S)$. The reference live-pool GAD
arm uses a FIFO queue tagged with the step that produced each row. With 256
discriminator rows per rank per step, capacity 4096 covers 16 steps. Each
pooled row replaces a fresh row; it does not enlarge the discriminator batch
or enter the policy update.

For exploratory live-pool sweeps, stale rows are sampled uniformly, by recency
$w_i=\exp[-\lambda(t-s_i)]$ with $\lambda=10^{-3}$, or by push-time loss
$w_i=p_i^\alpha$ with $p_i=\operatorname{softplus}(-m_i)$ and
$\alpha\in\{0.5,1\}$. Weighted draws use A-Res%
~\citep{efraimidis2006weighted}: sample $u_i\sim\mathrm{Uniform}(0,1)$, form
$u_i^{1/w_i}$, and take the largest keys. 

\section{Proofs of the Theoretical Results}
  \label{app:theory-proofs}

  We provide complete proofs of the three results in
  Section~\ref{sec:theory}. All expectations over learned score functions are
  taken over the randomness of the training procedure.

  \subsection{Proof of Theorem~\ref{thm:bayes-score}}

  \begin{proof}
  Fix a prompt \(x\) and suppress it from the notation. Let the finite response
  space be \(\mathcal Y=\{1,\ldots,m\}\), and write
  \(\tau_i=\tau(i\mid x)\), \(q_i=q(i\mid x)\), and \(r_i=r(x,i)\).
  The population loss is
  \[
    \mathcal L_D(r;q)
    =
    \sum_{i,j}\tau_iq_j
    \operatorname{softplus}(r_j-r_i).
  \]

  Terms with \(i=j\) are constant. For an unordered pair \(i\neq j\), let
  \(d=r_i-r_j\). Its contribution is
  \[
    g_{ij}(d)
    =
    \tau_iq_j\operatorname{softplus}(-d)
    +
    \tau_jq_i\operatorname{softplus}(d).
  \]

  Since all probabilities are positive, \(g_{ij}\) is strictly convex. Its
  derivative vanishes exactly when
  \[
    d
    =
    \log\frac{\tau_iq_j}{\tau_jq_i}
    =
    \log\frac{\tau_i}{q_i}
    -
    \log\frac{\tau_j}{q_j}.
  \]

  These pairwise conditions are simultaneously satisfied by
  \[
    r_i=\log\frac{\tau_i}{q_i}+c,
  \]
  where \(c\) is independent of the response. Because the complete loss is
  convex and this score minimizes every pairwise contribution, it is a global
  minimizer. Positivity of \(\tau_i\) and \(q_i\) makes the comparison graph
  connected, so all minimizing score differences are fixed. The only remaining
  freedom is the additive constant \(c\). Restoring the prompt dependence gives
  \[
    r_q^\star(x,y)
    =
    \log\frac{\tau(y\mid x)}{q(y\mid x)}
    +c_q(x).
  \]
  \end{proof}

  \subsection{Proof of Proposition~\ref{thm:pool-lower-bound}}

  \begin{proof}
  By linearity of expectation in the negative distribution,
  \[
    \mathcal L_t^\rho(r)
    =
    (1-\rho)\mathcal L_t^{\mathrm{fresh}}(r)
    +
    \rho\mathcal L_t^{\mathrm{hist}}(r).
  \]

  The softplus loss is nonnegative. Therefore,
  \[
    \mathcal L_t^\rho(r)
    \ge
    \rho\mathcal L_t^{\mathrm{hist}}(r).
  \]

  The softplus function is convex, so both component losses are convex in the
  score vector. Differentiating the mixture gives
  \[
    \nabla^2\mathcal L_t^\rho(r)
    =
    (1-\rho)\nabla^2\mathcal L_t^{\mathrm{fresh}}(r)
    +
    \rho\nabla^2\mathcal L_t^{\mathrm{hist}}(r).
  \]

  Under the curvature assumptions in the theorem, it follows that
  \[
    \nabla^2\mathcal L_t^\rho(r)
    \succeq
    \bigl[(1-\rho)\mu_{\mathrm{fresh}}
          +\rho\mu_{\mathrm{hist}}\bigr]I_{\mathcal S}
    \succeq
    \rho\mu_{\mathrm{hist}}I_{\mathcal S}.
  \]

  For completeness, consider the stated connectedness condition for the
  historical comparison graph. For any direction \(v\in\mathcal S\),
  \[
    v^\top\nabla^2\mathcal L_t^{\mathrm{hist}}(r)v
    =
    \sum_{i,j}
    \tau_i h_{t,j}\,
    \ell''(r_j-r_i)(v_j-v_i)^2,
  \]
  where \(\ell(u)=\operatorname{softplus}(u)\) and
  \(\ell''(u)=\sigma(u)\sigma(-u)>0\).

  On a bounded score set, \(\ell''(r_j-r_i)\) is uniformly bounded away from
  zero. If the historical comparison graph is connected, the quadratic form
  can vanish only when \(v_i=v_j\) for every connected pair. Thus \(v\) must be
  constant. The centered constraint \(v\in\mathcal S\) then implies \(v=0\).
  Hence the historical Hessian is uniformly positive definite on
  \(\mathcal S\), establishing \(\mu_{\mathrm{hist}}>0\) and completing the
  proof.
  \end{proof}

  \subsection{Proof of Proposition~\ref{thm:pool-bias-variance}}

  \begin{proof}
  Define the promptwise-centering operator
  \[
    (C_tf)(x,y)
    =
    f(x,y)
    -
    \mathbb E_{y'\sim\pi_{\theta_t}(\cdot\mid x)}f(x,y').
  \]
  Then \(\|f\|_{t,c}^2=\|C_tf\|_t^2\), where \(\|\cdot\|_t\) is the ordinary
  \(L_2\) norm under
  \(x\sim\mathcal D\) and \(y\sim\pi_{\theta_t}(\cdot\mid x)\).

  For either estimator \(\widehat r_t^a\), write
  \[
    \widehat r_t^a-r_t^\star
    =
    \bigl(\widehat r_t^a-\mathbb E\widehat r_t^a\bigr)
    +
    \bigl(\mathbb E\widehat r_t^a-r_t^\star\bigr).
  \]

  The first term has zero expectation. Consequently, the expected cross term
  vanishes after applying the linear centering operator \(C_t\), giving the
  bias--variance decomposition
  \[
    \mathbb E\|\widehat r_t^a-r_t^\star\|_{t,c}^2
    =
    V_a
    +
    \|\mathbb E\widehat r_t^a-r_t^\star\|_{t,c}^2.
  \]

  The assumed unbiasedness of the fresh estimator therefore gives
  \[
    \operatorname{MSE}(\widehat r_t^{\mathrm{fresh}},r_t^\star)
    =
    V_{\mathrm{fresh}},
  \]
  whereas the pooled estimator satisfies
  \[
    \operatorname{MSE}(\widehat r_t^{\mathrm{pool}},r_t^\star)
    =
    V_{\mathrm{pool}}+B_{\mathrm{pool}}^2.
  \]

  Using
  \(V_{\mathrm{pool}}\le V_{\mathrm{fresh}}-\Delta_t\), we obtain
  \[
    \operatorname{MSE}(\widehat r_t^{\mathrm{fresh}},r_t^\star)
    -
    \operatorname{MSE}(\widehat r_t^{\mathrm{pool}},r_t^\star)
    \ge
    \Delta_t-B_{\mathrm{pool}}^2.
  \]

  Finally, \(B_{\mathrm{pool}}^2<\Delta_t\) makes the right-hand side strictly
  positive. Hence the pooled estimator has lower reward MSE under the stated
  bias--variance condition.
  \end{proof}

\section{Seed-Level Robustness}
\label{app:seed-results}

We evaluate GAD and persistent-negative training using three paired random
seeds for each student family. Within each pair, both methods start from the
same seed-specific warmup checkpoint and use matched training configurations,
differing only in the discriminator negative distribution.

Figure~\ref{fig:judged-effects} reports the persistent-negative-minus-GAD
difference for each seed. Each effect is averaged equally across the four
judged-chat datasets. The individual points expose training-run variability;
the mean and standard deviation summarize the three paired effects. Because
only three seeds are available, the standard deviation is descriptive and
should not be interpreted as a precise uncertainty interval.

\begin{figure}[t]
\centering
\includegraphics[width=\linewidth]
{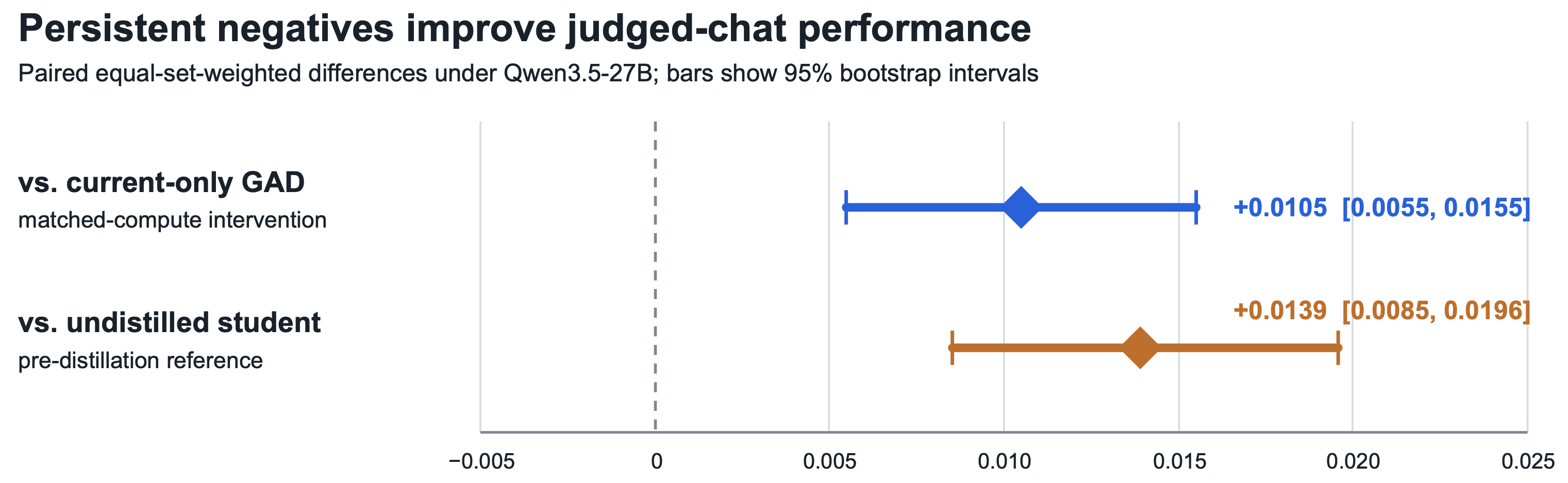}
\caption{\textbf{Performance gains across paired training seeds.}
Each point shows the equal-set-weighted improvement of persistent-negative
training over GAD for one paired seed, measured in percentage points.
Large markers denote the mean across three seeds, and error bars show one
standard deviation. Positive values favor persistent-negative training.}
\label{fig:judged-effects}
\end{figure}

\section{Additional Discriminator Diagnostics}
\label{app:discriminator-diagnostics}

\subsection{Fresh-policy discriminator accuracy}

We evaluate the discriminator immediately before each update using fresh
current-policy comparisons. The prompt--teacher slice is fixed across methods,
but each student generates its own negative responses. Consequently, this
diagnostic measures the behavior of the coupled student--discriminator system
rather than holding the student distribution fixed.

Table~\ref{tab:stability} reports results from the initial primary run across
748 adversarial updates. Persistent-negative training has slightly higher
mean accuracy, lower temporal variability, a higher minimum, and fewer
below-chance evaluations than GAD. Because consecutive measurements come from
the same training trajectory, these statistics are descriptive rather than
independent observations.

\begin{table}[t]
\centering
\small
\caption{\textbf{Fresh-policy discriminator accuracy in the initial primary
run.} Standard deviation is computed across adversarial updates, and
``dips'' counts pre-update evaluations below chance.}
\label{tab:stability}
\setlength{\tabcolsep}{4.5pt}
\renewcommand{\arraystretch}{0.96}
\begin{tabular}{@{}lcccc@{}}
\toprule
Method & Mean & Std. & Minimum & Dips $<0.5$ \\
\midrule
GAD
& 0.894 & 0.087 & 0.414 & 6 \\
Persistent-negative (ours)
& \textbf{0.903} & \textbf{0.068} & \textbf{0.461} & \textbf{1} \\
\bottomrule
\end{tabular}
\end{table}

These measurements are consistent with a smoother discriminator trajectory
under persistent-negative training. They do not, however, isolate changes in
the discriminator from changes in the evolving student policy.

\subsection{Common-probe reward geometry}

Fresh-policy accuracy does not reveal whether two discriminators assign
different rewards to identical responses. We therefore construct a common
probe by sampling eight responses for each of 128 held-out prompts from the
shared warmup policy. The resulting response groups are fixed and scored by
the final GAD and persistent-negative discriminators.

Across groups, the discriminators have a mean within-group Spearman rank
correlation of $0.803$ and a mean correlation of $0.846$ between their
standardized reward profiles. Thus, the learned reward functions are
substantially aligned, but they are not related solely by transformations
removed by promptwise GRPO normalization. Differences in response ordering
and relative score spacing therefore remain visible to the policy update.

This diagnostic uses one final discriminator pair and should be interpreted
as evidence that the pool changes the policy-facing reward geometry, not as
evidence that this change mediates the final performance improvement.

\end{document}